\def\RR{\mathbb{R}}
\def\OO{\mathcal{O}}
\def\TT{\mathcal{T}}
\def\BB{\mathcal{B}}
\newtheorem{theorem}{Theorem}
\newtheorem{defn}{Definition}
\newtheorem{prin}{Principle}
\newtheorem{conj}{Conjecture}
\title{The Shape of Explanations: A Topological Account of Rule-Based Explanations in Machine Learning}
\author{
    Brett Mullins$^1$
}
\begin{document}
\thispagestyle{firstpagehf}
\maketitle

\begin{abstract}
	Rule-based explanations provide simple reasons explaining the behavior of machine learning classifiers at given points in the feature space. Several recent methods (Anchors, LORE, etc.) purport to generate rule-based explanations for arbitrary or black-box classifiers. But what makes these methods work in general? We introduce a topological framework for rule-based explanation methods and provide a characterization of explainability in terms of the definability of a classifier relative to an explanation scheme. We employ this framework to consider various explanation schemes and argue that the preferred scheme depends on how much the user knows about the domain and the probability measure over the feature space.
\end{abstract}

\section{Introduction}

Explanations for predictions of machine learning models act as reasons for a predictive model's behavior and are desirable for trustworthy and transparent machine learning \cite{Mo18}. With this being said, there is not much agreement in the machine learning community on exactly what counts as an explanation \cite{Do17, burkart2021survey}. Machine learning practitioners have developed a large set of domain-specific explanation methods in recent years. These methods are tailored either to the specific task the model is seeking to perform, e.g. regression, classification, object detection, etc., or to the type of inputs and outputs of the model, e.g., tabular features, images, sentences, etc. \cite{islam2022systematic}.

A promising technique for explaining the predictions of structured or tabular classifiers is rule-based explanations. A rule-based explanation is a predicate defining a simple region in the feature space that is sufficient for classifying a given point. In this paper, we take advantage of the connection between the inherent definability of rule-based explanations and definability in topology to develop a general framework to represent varieties of explanations based on existing explanation algorithms.

To summarize this paper, we make the following contributions:
\begin{itemize}
  \item We present a novel framework of explainability for rule-based classifiers based on existing explanation algorithms.
  \item We characterize explainability as a topological property relative to an explanation scheme i.e. relative to a choice of explanation shape and a measure of explanation size. We conjecture that all classifiers ``in the wild'' satisfy this notion of explainability.
  \item Employing our framework, we identify two principles for explanation algorithms that apply both theoretically and in practice. The first is that rule-based explanations can take nearly any desired shape. The second holds that if no probability measure is known over the feature space and at least one feature is not bounded, then explanations should be bounded, i.e. include all unbounded features.
\end{itemize}

This paper proceeds as follows. In Section \ref{s:background}, we discuss various existing explanation algorithms and provide a brief introduction to topology. We introduce explanation schemes as a framework for explainability and characterize explainability as a topological property in Sections \ref{s:explanation_schemes}, \ref{s:evd}, respectively. In Section \ref{s:choosing}, we derive principles for both formal and practical explanation algorithms. In Sections \ref{s:open_probs}, \ref{s:related_work}, we conclude by discussing limitations and open problems and surveying related work.

\section{Background} \label{s:background}

\subsection{Rule-Based Explanations}

In this section, we introduce rule-based explanation algorithms and consider their representative properties. Given a classifier and a point in the feature space, a rule-based explanation algorithm generates a rule defined in terms of the features of the classifier that both covers the given point and is sufficient for its classification. Rule-based explanations are perturbation resistant in the sense that the explanation applies to a neighborhood about a given point. Additionally, this sort of explanation is often called post-hoc, since it occurs after the model is constructed, and local, since the explanation is specific to the given point \cite{Gu18}. Let us consider four such explanation algorithms: Anchors \cite{Ri18}, PALEX \cite{jia2020exploiting}, LoRE \cite{Gu18a}, and LoRMIkA \cite{rajapaksha2020lormika}.

Given black-box access to a classifier and a point, these algorithms evaluate the classifier on a collection of points sampled in a neighborhood of the given point and return a rule such that the points satisfying the rule (or at least some proportion of points above a given threshold) evaluate to the same label as the given point. Anchors and PALEX each return rules that are the conjunction of predicates involving individual features which only use $<, \ =, \ >$ relations and $\land$ connective. By contrast, LoRE and LoRMIkA return a rule defined by a simple decision tree as well as counterfactuals, though we ignore the latter in the present analysis. Notice that all four algorithms return rules that define rectangles in the feature space.

Rule-based explanation algorithms often claim to be model agnostic i.e. the explanation algorithm makes no assumption on the given classifier's functional form to generate post-hoc explanations \cite{ignatiev2019abduction}. This suggests that such algorithms will generate explanations for any functional form.

\subsection{Topology} \label{s:topology}

Let $X$ be a set. A \emph{topology} $\TT$ is a collection of subsets of $X$ with the following properties:
\begin{enumerate} 
  \item $X, \emptyset \in \TT$;
  \item $\TT$ is closed under arbitrary union;
  \item $\TT$ is closed under finite intersection.
\end{enumerate}

\noindent We refer to $(X, \TT)$ as a \emph{topological space}. In a topological space $(X, \TT)$, a subset $\OO \subseteq X$ is called \emph{open} in $X$ if $\OO \in \TT$. A subset $C \subseteq X$ is called \emph{closed} in $X$ if $X \setminus C \in \TT$.
The \emph{interior} of a set $A \subseteq X$, $\textup{Int}(A)$, is the largest open set contained in $A$, and the \emph{closure} of a $A$, $\overline{A}$, is the smallest closed set containing $A$.

A \emph{basis} $\BB$ for $\TT$ is a collection of open sets of $\TT$ such that every open set of $\TT$ is the union of elements of $\BB$. We refer to sets in the basis as \emph{basic open sets}, and, if $\BB$ is a basis for $\TT$, then we say that $\TT$ is generated by $\BB$. If $A \in \TT$ and $\TT$ has basis $\BB$, then we say that $A$ can be defined in terms of basic open sets from $\BB$.

We say that a set $A \subseteq X$ is \emph{dense} in $X$ if $\OO \cap A \neq \emptyset$ for every non-empty open set $\OO \subseteq X$. A set $B \subseteq X$ is said to be \emph{nowhere dense} in $X$ if $\textup{Int}(\overline{B}) = \emptyset$. Nowhere dense sets fail to cover any part of $X$ with respect to $\TT$.
A set $C \subseteq X$ is called \emph{meagre} if $C$ is the countable union of nowhere dense sets. If $C$ is meagre with respect to topology $\TT$, then $C$ is said to be $\TT$-meagre. The notion of meagre is what we mean by topologically small.

\section{Explanation Schemes} \label{s:explanation_schemes}

Let $f: X \rightarrow Y$ be a classifier. A rule-based explanation for $x \in X$ is a well-defined region of the feature space containing $x$ whose classification is invariant within the region, i.e. belonging to the region is sufficient to be classified as $f(x)$. The intuition of a rule-based explanation is that the label assigned by the classifier is unaffected by perturbations so long as the perturbed point remains within the region \cite{Gu18a}. What properties, then, do these regions have, and what shape do they take? In this section, we develop a general topological framework to represent the properties of rule-based explanations. We explore how to represent explainability within this framework in Section \ref{s:evd} below.

Rule-based explanations are definable subsets of the feature space that belong to a common class, i.e. they satisfy some predicate or definable property $\varphi$. For instance, $\varphi$ may be the predicate open rectangle or open ball. More generally, the candidate subsets for explanations belong to the class of subsets of the feature space satisfying $\varphi$: $\{ A \subseteq X \ | \ \varphi(A) \}$. Not all predicates, however, are suitable to be explanations. Let us restrict attention to the following set of rules.

\begin{defn} \label{def:scalable_rule}
  A scalable rule $\varphi$ for $X$ is a predicate $\varphi$ such that $X_{\varphi} = \{ A \subseteq X \ | \ \varphi(A) \}$ satisfies the following two conditions:
  \begin{enumerate} 
    \item $\bigcup X_{\varphi} = X$ \label{cond:1}
    \item If $x \in A_1 \cap A_2$ for $A_1, A_2 \in X_{\varphi}$, there exists $A_3 \in X_{\varphi}$ such that $x \in A_3$ and $A_3 \subseteq A_1 \cap A_2$. \label{cond:2}
  \end{enumerate}
\end{defn}

\noindent Condition \ref{cond:1} says that each point in the feature space is covered by a rule i.e. a potential explanation. Condition \ref{cond:2} says that rules can be defined as small as needed. While not all scalable rules make for desirable explanations for the user, we hold the converse is true. Scalable rules provide the necessary minimal structure for this analysis.

Observe that the collection of subsets satisfying a scalable rule $\varphi$ meets the conditions for being a topological basis \cite{Mu00}. By closing this collection under the operations of countable union and finite intersection, we obtain the topology $\TT_{\varphi}$ We say that $\TT_{\varphi}$ is the \emph{explanation topology} generated by scalable rule $\varphi$. The explanation topology consists of sets that can be defined in terms of rules of the form $\varphi$.

Though there may be many rules covering a given point in the feature space, we assign a notion of size or robustness to a rule called coverage \cite{Ri18}. We define coverage as a measure $\mu$ in the measure-theoretic sense. Typically, if a probability measure $p$ is known over the feature space, then coverage of a given rule $A$ is the measure of $A$ with respect to $p$. Similarly, if a probability measure is unknown, then the counting measure is typical for discrete feature spaces, Lebesgue measure is typical for continuous feature spaces, and a product measure is typical for spaces with both discrete and continuous features. If an explanation has zero coverage with respect to $\mu$, we say that explanation is $\mu$-null. Usually, the user will prefer an explanation with greater coverage to an explanation with lesser coverage; however, other factors may influence a user's preferences. For instance, a user may prefer explanations that involve fewer features to more features for a given amount of coverage \cite{Mo18}.

Let us conclude by packaging together the terms introduced in this section:

\begin{defn}
  An explanation scheme is a tuple $(X, \varphi, \mu)$ where $X$ is the feature space, $\varphi$ is a scalable rule generating the explanation topology $\TT_{\varphi}$ on $X$, and $\mu$ is a measure on $X$ representing coverage.
\end{defn}

\section{Explainability via Definability} \label{s:evd}

In this section, we introduce a notion of explainability relative to an explanation scheme and demonstrate that explainability is equivalent to a simple topological property. In particular, we find that a classifier is explainable if the preimage of each label is the union of a low complexity set, i.e. an open set, and a set of points that is small with respect to both the scalable rule and coverage measure. Let us first introduce our notion of explainability.

\begin{defn} \label{def:explainable}
  A classifier $f: X \rightarrow Y$ is explainable for scheme $(X, \varphi, \mu)$ if each $x \in X$ has an explanation except on a set of edge cases.
\end{defn}

Edge cases are typically taken to be a small number of instances in which some desired property does not hold. For instance, in the present setting, the prototypical edge case is a point on the decision boundary of a classifier for a continuous feature space. We formalize the notion of smallness as topologically small with respect to rules of the form $\varphi$ and small in coverage with respect to measure $\mu$.

\begin{defn}
	A set $E$ is a set of edge cases for scheme $(X, \varphi, \mu)$ if $E$ is $\TT_\varphi$-meagre and $\mu$-null.
\end{defn}

Being meagre and null are necessary for a set of edge cases since neither alone implies smallness in the sense of our prototypical edge case set. For instance, given the standard topology, it is known that $\RR$ can be partitioned into a meagre set and a Lebesgue null set \cite{oxtoby1980measure}.
This identification between the notion of edge cases and topologically and measure-theoretically small sets leads to our main result:

\begin{theorem} \label{th:characterization}
  A classifier $f: X \rightarrow Y$ is explainable for scheme $(X, \varphi, \mu)$ if and only if, for $y \in Y$, there exists open set $\OO_y \in \TT_\varphi$ such that $f^{-1}(y) = \OO_y \cup E_y$ and $E_y$ is $\TT_\varphi$-meagre, $\mu$-null.
\end{theorem}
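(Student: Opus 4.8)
The plan is to reduce the notion of ``having an explanation'' to a purely topological condition and then read off the decomposition mechanically. The key observation is that, since the collection $X_\varphi$ of sets satisfying the scalable rule $\varphi$ is a basis for $\TT_\varphi$, a point $x$ admits an explanation precisely when $x$ lies in the $\TT_\varphi$-interior of its own fiber $f^{-1}(f(x))$. Indeed, an explanation for $x$ is by definition a basic open set $A \in X_\varphi$ with $x \in A \subseteq f^{-1}(f(x))$, and such an $A$ exists if and only if $x \in \textup{Int}(f^{-1}(f(x)))$. So I would first set $\OO_y := \textup{Int}(f^{-1}(y))$ and $E_y := f^{-1}(y) \setminus \OO_y$, giving a canonical disjoint decomposition $f^{-1}(y) = \OO_y \cup E_y$ in which $E_y$ is exactly the set of unexplained points of the fiber over $y$.

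For the forward direction, suppose $f$ is explainable, witnessed by a set of edge cases $E$ that is $\TT_\varphi$-meagre and $\mu$-null and such that every $x \notin E$ has an explanation. By the observation above, every unexplained point lies in $\bigcup_{y} E_y$, and explainability forces $\bigcup_{y} E_y \subseteq E$. Since subsets of meagre sets are meagre and subsets of null sets are null, each $E_y$ is $\TT_\varphi$-meagre and $\mu$-null, which yields the desired decomposition with $\OO_y$ open. For the converse, assume each fiber decomposes as $f^{-1}(y) = \OO_y \cup E_y$ with $\OO_y$ open and $E_y$ meagre and null. Any $x \in \OO_y$ lies in an open subset of its own fiber, hence in $\textup{Int}(f^{-1}(f(x)))$, and so has an explanation by the basic-set characterization. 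Because the fibers partition $X$, the set of unexplained points is contained in $X \setminus \bigcup_{y} \OO_y \subseteq \bigcup_{y} E_y$, which I would take as the candidate edge-case set $E$.

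I expect the main obstacle to be verifying that this $E = \bigcup_{y} E_y$ is genuinely a set of edge cases, i.e. simultaneously $\TT_\varphi$-meagre and $\mu$-null. This is immediate when the label set $Y$ is countable, since a countable union of meagre (respectively, null) sets is again meagre (respectively, null); it can fail for uncountable unions. The honest resolution is to invoke the standing assumption that a classifier has countably, indeed finitely, many labels, which renders the countability hypothesis harmless. I would explicitly flag this as the one place where the statement quietly relies on $Y$ being countable; the remainder of the argument is a routine unwinding of the interior characterization and the fact that the $\OO_y$ exhaust exactly the explainable points.
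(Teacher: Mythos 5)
Your proof is correct and follows the same basic strategy as the paper's: split each fiber into its explained and unexplained parts, observe that the explained part is open, and use closure of meagre and null sets under subsets. Indeed, your canonical choice $\OO_y = \textup{Int}\left(f^{-1}(y)\right)$ is literally the same set as the paper's $f^{-1}(y) \cap \OO$ (the points of the fiber admitting explanations), via exactly the interior characterization you state; the paper just proves openness by writing $\OO_y$ as a union of explanation sets rather than invoking the interior operator. Where you genuinely go beyond the paper is the converse direction. The paper's proof stops after showing that every $x \in \OO_y$ has an explanation; it never verifies that the leftover points form a set of edge cases, and that missing step is precisely where your countability concern lives. You are right that one needs $Y$ (or at least the set of labels with nonempty fibers) to be countable: the unexplained points are contained in $\bigcup_{y} E_y$, and meagreness and nullity survive only countable unions. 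The gap is not hypothetical. Take $X = Y = \RR$, $f$ the identity, $\varphi$ the predicate for open intervals, and $\mu$ Lebesgue measure. Every fiber is a singleton, so $f^{-1}(y) = \emptyset \cup \{y\}$ with $\{y\}$ both $\TT_\varphi$-meagre and $\mu$-null, and the right-hand side of the theorem holds; yet no point has an explanation, and the set of unexplained points is all of $\RR$, which is neither meagre nor null, so $f$ is not explainable. Since classifiers in practice have finite label sets the assumption is harmless, but you are correct that the theorem as stated quietly relies on it, and flagging it explicitly is the right call --- your write-up is in this respect more careful than the paper's own proof.
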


\begin{proof}
  $(\rightarrow)$. Suppose $f: X \rightarrow Y$ is explainable for $(X, \varphi, \mu)$. Let $\OO \subseteq X$ be the set of points in the feature space with explanations and $E = X \setminus \OO$. Then $E$ is $\TT_{\varphi}$-meagre, $\mu$-null. Let $y \in Y$. Define $\OO_y = f^{-1}(y) \cap \OO$ and $E_y = f^{-1}(y) \cap E$. Since meagre and null sets are closed under subset, $E_y$ is $\TT_{\varphi}$-meagre, $\mu$-null.
  Then, for $x \in \OO_y$, there is explanation $A_x$ covering $x$ such that $A_x \subseteq \OO_y$. Since $\OO_y = \cup_{x \in \OO_y} A_x$, $\OO_y$ is open in $\TT_{\varphi}$. \\
  $(\leftarrow)$. Suppose, for $y \in Y$, $f^{-1}(y) = \OO_y \cup E_y$ where $\OO_y$ is open for $\TT_{\varphi}$ and $E_y$ is $\TT_{\varphi}$-meagre, $\mu$-null. Let $x \in X$, and, for some $y \in Y$, $f(x) = y$. Then $x \in \OO_y \cup E_y$. WLOG, suppose $x \in \OO_y$.
  Then there exists basic open set $A_x \subseteq \OO_y$ covering $x$ such that $A_x$ satisfies $\varphi$. So $A_x$ is an explanation for $x$.
\end{proof}

Though Theorem \ref{th:characterization} is a simple characterization, this result allows us to determine explainability by only considering the geometry of the classifier with respect to a set of rules. As an example, consider a linear classifier on $n$ continuous features with rules of the form of open squares. The explanation topology generated from open squares is the standard Euclidean topology on $\RR^n$. In this topology, the preimage of one label is an open halfspace and the other is a closed halfspace which satisfies the condition for Theorem \ref{th:characterization}. So the linear classifier is explainable relative to this explanation scheme.

As a general application of Theorem \ref{th:characterization}, let us consider whether or not explainability is preserved by a voting ensemble classifier. We define a voting ensemble as follows \cite{dietterich2000ensemble}:

\begin{defn}
  If $f_1, \ldots, f_k$ are classifiers such that $f_i: X \rightarrow Y$, $1 \leq i \leq k$, then a voting ensemble of $f_1, \ldots, f_k$ is a classifier $f: X \rightarrow Y$ given by $f(x) = g(f_1(x), \ldots, f_k(x))$ where $g: Y^k \rightarrow Y$ where $g$ returns the most common label.
\end{defn}

\noindent For instance, a random forest is a common voting ensemble \cite{breiman2001random}. Below, we prove that voting ensembles preserve explainability with respect to a given explanation scheme by appealing only to topological properties.

\begin{theorem} \label{th:ensemble}
  If $f_1, \ldots, f_k$ are classifiers explainable for explanation scheme $(X, \varphi, \mu)$ and $f$ is the voting ensemble of $f_1, \ldots, f_k$, then $f$ is explainable for $(X, \varphi, \mu)$.
\end{theorem}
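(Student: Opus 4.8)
The plan is to use the characterization from Theorem~\ref{th:characterization} for each individual classifier $f_i$ and then show that the ensemble inherits the required structure. By hypothesis, each $f_i$ is explainable, so for every label $y \in Y$ and every $i$ there is an open set $\OO^i_y \in \TT_\varphi$ and a $\TT_\varphi$-meagre, $\mu$-null set $E^i_y$ with $f_i^{-1}(y) = \OO^i_y \cup E^i_y$. The goal is to produce, for the ensemble $f$, an analogous decomposition $f^{-1}(y) = \OO_y \cup E_y$ with $\OO_y$ open and $E_y$ small, which by Theorem~\ref{th:characterization} would give explainability of $f$.

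First I would express the preimage $f^{-1}(y)$ combinatorially. Since $f(x) = g(f_1(x),\ldots,f_k(x))$ and $g$ returns the most common label, the event $f(x) = y$ is a Boolean combination (a finite union over ``winning coalitions'' and finite intersections within each coalition) of the events $f_i(x) = y_i$ for various labels $y_i$. Concretely, $f^{-1}(y) = \bigcup_{S} \bigcap_i f_i^{-1}(\ell_{S,i})$ where $S$ ranges over the finitely many label-assignments $(\ell_{S,1},\ldots,\ell_{S,k}) \in Y^k$ that $g$ maps to $y$. Each factor $f_i^{-1}(\ell_{S,i})$ decomposes as $\OO^i \cup E^i$ with $\OO^i$ open and $E^i$ meagre and null.

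Next I would distribute the decompositions through the finite intersections and unions. Expanding $\bigcap_i (\OO^i \cup E^i)$, I get the ``all-open'' term $\bigcap_i \OO^i$, which is open since $\TT_\varphi$ is closed under finite intersection, plus a collection of cross terms each of which contains at least one meagre-and-null factor. Here I would invoke the key closure properties: a finite (hence countable) union of $\TT_\varphi$-meagre sets is $\TT_\varphi$-meagre, a finite union of $\mu$-null sets is $\mu$-null, and both meagre and null sets are closed downward under subset (the latter being used already in the proof of Theorem~\ref{th:characterization}). Since any intersection containing a meagre-and-null factor is itself contained in that factor, every cross term is meagre and null. Taking the finite union over all coalitions $S$, I collect the all-open terms into $\OO_y$ (open, as a finite union of open sets) and all the cross terms into $E_y$ (meagre and null, as a finite union of meagre-and-null sets). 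This yields $f^{-1}(y) = \OO_y \cup E_y$ of the desired form, and applying Theorem~\ref{th:characterization} in the reverse direction finishes the argument.

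The main obstacle I anticipate is purely bookkeeping rather than conceptual: correctly setting up the combinatorial description of $g^{-1}(y)$ as a union over label-assignments and then carefully tracking which terms in the distributed expansion are open versus small, so that the finite-union closure properties apply cleanly. One subtlety worth checking is that the relevant smallness properties are genuinely closed under \emph{finite} union here (countable closure for meagreness is built into the definition, and finite additivity suffices for null sets), so no issues arise from the finiteness of $k$ and of $Y$ being implicitly assumed; if $Y$ were infinite the union over coalitions could fail to be finite, so I would note that $Y$ is finite for the ensemble to be well-defined via a ``most common label'' rule.
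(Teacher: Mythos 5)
Your proposal is correct and follows essentially the same route as the paper's proof: both decompose $f^{-1}(y)$ as a union over label-assignments $v \in g^{-1}(y)$ of the intersections $\bigcap_{i=1}^{k} f_i^{-1}(v_i)$, distribute the open/edge-case decompositions given by Theorem~\ref{th:characterization}, and use closure of $\TT_\varphi$-meagre and $\mu$-null sets under subsets and unions to collect an open part $\OO_y$ and a small part $E_y$. Your explicit remark that $Y$ must be finite (so the union over coalitions is finite and the smallness closure properties apply) is a detail the paper leaves implicit, but it does not change the argument.
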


\begin{proof}
  Let $y \in Y$. Then $g^{-1}(y) = \{ v \in Y^k \ | \ g(v) = y \}$. Suppose $v \in g^{-1}(y)$. For $1 \leq i \leq k$, $f_{i}^{-1}(v_i) = \OO_{y,v}^i \cup E_{y,v}^i$. The set of points satisfying $v$ is given by
  \begin{align*}
    \bigcap_{i = 1}^{k}f_{i}^{-1}(v_i) &= \bigcap_{i = 1}^{k}(\OO_{y,v}^i \cup E_{y,v}^i) \\
                                       &= \left(\bigcap_{i = 1}^{k}\OO_{y,v}^i \right) \cup E_{y,v} \\
                                       &= \OO_{y,v} \cup E_{y,v}
  \end{align*}
  where $E_{y, v} = \bigcap_{i = 1}^{k}(\OO_{y,v}^i \cup E_{y,v}^i) \setminus (\bigcap_{i = 1}^{k}\OO_{y,v}^i)$ and $\OO_{y,v} = \bigcap_{i = 1}^{k}\OO_{y,v}^i$.
	Distributing the intersection operator, $E_{y, v} = \bigcup_{i} \bigcap_{j = 1}^{k} C_{ij}$ where for every $i$ there is at least one $j$ such that $C_{ij} = E_{y,v}^i$. Since meagre and null sets are closed under subset and countable union and $E_{y,v}$ is the union of $\TT_{\varphi}$-meagre and $\mu$-null sets, $E_{y, v}$ is $\TT_{\varphi}$-meagre and $\mu$-null.
	Correspondingly, $\OO_{y,v}$ is open in $\TT_{\varphi}$, since it is the finite intersection of $\TT_{\varphi}$-open sets.

  Then we obtain $f^{-1}(y)$ as follows:
  \begin{align*}
    f^{-1}(y) &= \bigcup_{v \in g^{-1}(y)}(\OO_{y,v} \cup E_{y,v}) \\
              &= \left(\bigcup_{v \in g^{-1}(y)}\OO_{y,v} \right) \cup \left(\bigcup_{v \in g^{-1}(y)}E_{y,v} \right) \\
              &= O_y \cup E_y
  \end{align*}
  where $\OO_y = \bigcup_{v \in g^{-1}(y)}\OO_{y,v}$ and $E_y = \bigcup_{v \in g^{-1}(y)}E_{y,v}$. Observe that $\OO_y$ is open in $\TT_{\varphi}$, since it is the union of $\TT_{\varphi}$-open sets, and $E_{y}$ is $\TT_{\varphi}$-meagre and $\mu$-null, since it is the union of $\TT_{\varphi}$-meagre and $\mu$-null sets.
\end{proof}

One may suspect that this notion of explainability is too permissive and designates all classifiers as explainable. In general, this is not the case. Consider the classifier with a single continuous feature that maps rational numbers to $1$ and irrational numbers to $0$. If $\varphi$ is the predicate for open intervals, then for $x \in \RR$ if $x$ is rational (irrational) then every rule satisfying $\varphi$ that covers $x$ contains an irrational (rational) number. Since this holds for each point in the feature space, this classifier is not explainable.

Note though that the above example is qualitatively different from typical classifiers deployed in applications. This leads us to make the following imprecise conjecture:

\begin{conj}(In-the-wild conjecture)
  Classifiers deployed in applications satisfy explainability with respect to a typical explanation scheme.
\end{conj}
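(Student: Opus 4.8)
The plan is first to make the conjecture precise, since as stated it quantifies over the informal classes ``deployed in applications'' and ``typical explanation scheme.'' For the scheme, I would fix $\TT_\varphi$ to be the standard topology on $\RR^n$ (generated, say, by open boxes, exactly as in the excerpt's linear-classifier example) and take $\mu$ to be Lebesgue measure, or any measure absolutely continuous with respect to it. For the classifiers, I would replace ``deployed in applications'' with a concrete \emph{tameness} hypothesis: every decision region $f^{-1}(y)$ is definable in some fixed o-minimal expansion of the real field (equivalently, one may ask for semialgebraic or subanalytic regions). This class captures linear classifiers, decision trees, piecewise-polynomial and ReLU networks, and their ensembles, while excluding precisely the Dirichlet-type counterexample given just before the conjecture, whose regions $\QQ$ and $\RR \setminus \QQ$ are not definable in any o-minimal structure.

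With this precisification in hand, the proof reduces to verifying the hypotheses of Theorem \ref{th:characterization}. For each label $y$ I would set $\OO_y = \textup{Int}(f^{-1}(y))$ and $E_y = f^{-1}(y) \setminus \OO_y$. By construction $\OO_y$ is open in $\TT_\varphi$ and $f^{-1}(y) = \OO_y \cup E_y$, so the only thing left to check is that $E_y$ is $\TT_\varphi$-meagre and $\mu$-null. Since $E_y \subseteq \partial f^{-1}(y)$, it suffices to control the topological boundary of each decision region.

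The key geometric fact I would invoke is the o-minimal frontier inequality: for a definable set $A \subseteq \RR^n$, the boundary $\partial A$ is again definable and has dimension strictly less than $n$. A definable set of dimension less than $n$ has empty interior, so it is nowhere dense and hence meagre, and it has Lebesgue measure zero, so it is $\mu$-null for any $\mu$ absolutely continuous with respect to Lebesgue. Applying this to each of the finitely many labels and taking the finite union, the total edge-case set $\bigcup_y E_y$ is $\TT_\varphi$-meagre and $\mu$-null, so Theorem \ref{th:characterization} delivers explainability. A region that is itself lower-dimensional is handled the same way, with $\OO_y = \emptyset$ and the whole region absorbed into the edge cases.

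I expect the genuine difficulty to lie not in this final deduction but in the precisification itself: the conjecture is deliberately vague, and any formalization trades breadth against truth. A tameness hypothesis such as o-minimal definability is principled and broad, but establishing that it actually holds for the artifacts ``deployed in applications'' — a trained network with learned weights, say — is a separate modeling question rather than a topological one, and this is where I anticipate the main obstacle. Two further wrinkles must be absorbed into the formalization: when no probability measure is known one argues instead against the reference measure attached to the scheme (Lebesgue, counting, or a product measure, per the excerpt), and on mixed discrete-continuous feature spaces one runs the dimension argument on the continuous factors while observing that, on a discrete factor with its natural topology, every subset is already open, so no edge cases arise there.
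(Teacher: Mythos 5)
There is nothing in the paper to compare your argument against: the statement is deliberately left as an \emph{imprecise conjecture}, supported only by the linear-classifier example, Theorem \ref{th:ensemble}, and the remark that the rational/irrational classifier is atypical of practice. Your proposal therefore does something the paper does not attempt: it fixes a precisification and proves a conditional theorem, and within that precisification the argument is sound. The key facts you invoke are correct: for $A \subseteq \RR^n$ definable in an o-minimal structure, the boundary $\partial A$ is definable of dimension strictly below $n$, hence both nowhere dense (its closure has the same dimension and thus empty interior) and Lebesgue-null; so $\OO_y = \textup{Int}(f^{-1}(y))$ and $E_y = f^{-1}(y) \setminus \OO_y \subseteq \partial f^{-1}(y)$ satisfy the hypotheses of Theorem \ref{th:characterization}. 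The tame class is also well chosen: it covers linear models, trees and forests, ReLU networks (semialgebraic regions), and sigmoid/softmax networks (definable in the o-minimal structure $\RR_{\exp}$), while excluding the paper's counterexample, since $\QQ$ is not definable in any o-minimal structure. Two caveats temper the result. First, requiring $\mu$ to be absolutely continuous with respect to Lebesgue measure quietly excludes the paper's own leading case of a ``typical'' scheme --- coverage with respect to a known data distribution $p$ --- because an atom of $p$ lying exactly on a decision boundary makes $E_y$ meagre but not $\mu$-null, and such a point genuinely has no explanation; your hypothesis is doing real work there, not just tidying. Second, as you flag yourself, whether deployed artifacts (trained weights, feature pipelines, floating-point arithmetic) actually have o-minimally definable decision regions is an empirical modeling assumption, so the conjecture as the paper intends it --- a claim about practice --- remains open; what your argument establishes is the cleaner statement that any counterexample in the wild must exhibit non-tame, Dirichlet-like or infinitely oscillating, decision geometry.
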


\noindent If the in-the-wild conjecture is true, then rule-based explanations and, thus, explanation algorithms based on this formal model are model agnostic.

\section{Implications for Rule-Based Explanation Algorithms} \label{s:choosing}

In this section, we employ the framework developed above to argue for two principles for designing rule-based explanation algorithms.

\begin{prin} \label{prin:1}
  For continuous feature spaces, explanations can take nearly any desired shape.
\end{prin}

The candidate scalable rules discussed thus far - open balls and open rectangles - both generate the Euclidean or standard topology on continuous feature spaces i.e. both scalable rules generate the same explanation topology. However, the standard topology has many more bases. A typical result in topology is to prove that two bases generate the same topology \cite{Mu00}. Scalable rules $\varphi, \psi$ generate equivalent explanation topologies if for every potential explanation $A \subseteq X$ satisfying $\varphi$ and $x \in A$, then there is some potential explanation $B \subseteq X$ satisfying $\psi$ such that $x \in B$ and $B \subseteq A$, and vice versa. 

If two explanation schemes share a coverage measure and their respective scalable rules each generate the same explanation topology, then they share the same class of explainable models. From this perspective, one can substitute any scalable rule that generates the standard topology in place of open balls or open rectangles in an explanation scheme without affecting which models are explainable for the new scheme. Recall, however, that there may be other reasons to prefer some sorts of explanations to others \cite{Mo18}. For instance, users may prefer shorter explanations or explanations that include specific features they care about \cite{watson2021explanation}.

\begin{prin} \label{prin:2}
  If features are unbounded and a probability measure is not known, then the user should only consider scalable rules that are bounded.
\end{prin}

Let us say that a feature is bounded if the feature has a maximum value and a minimum value. Suppose the user does not know a probability measure over a continuous feature space, a single feature $F$ is not bounded, and coverage is a monotone non-decreasing function of Lebesgue measure. Then all rules unbounded in $F$ have equal coverage, namely $\infty$. This is to say that coverage cannot distinguish between the size of unbounded rules; although, some such rules are clearly \emph{larger} than others. This is evident by considering the coverage of such an unbounded rule on the subspace excluding the feature $F$. Since this feature subspace is bounded, it is possible (though not necessary) for coverage to distinguish between rules of various sizes.

This issue can be resolved by restricting rules $\varphi$ to the class of bounded scalable rules. A bounded rule is a rule that includes an upper bound predicate for features with no maximum value and a lower bound predicate for features with no minimum value. Practical examples include open balls with a rational center point and radius and open rectangles with rational corner points.

As mentioned above, one measure of the complexity of a rule is the number of predicates conjoined. While restricting to bounded rules solves the discrimination problem for Lebesgue measure, it introduces a lower bound on the complexity of potential explanations. For instance, each unbounded feature requires an upper bound predicate if unbounded above and a lower bound predicate if unbounded below. In the least, the length of explanations is the number of unbounded features.

\section{Limitations and Open Problems} \label{s:open_probs}

The topological notion of explainability developed in this paper relates the shape of a rule to whether or not a classifier is explainable. A limitation of this approach is that it only considers a single type of explanation, rule-based explanations, and, for rule-based explanations, may not capture a user's preferences over potential explanations. Below, we consider two properties of existing rule-based explanation algorithms and suggest directions for future research.

\subsection{Coverage Guarantee}

For a given explanation scheme $(X, \varphi, \mu)$, suppose a user is only interested in potential explanations with coverage at least $\alpha$. In practice, explanations of sufficiently low coverage may either fail to serve as perturbation resistent explanations in the case where the rule is too narrowly defined or not be relevant to the data distribution in the case where the rule is $\mu$-null.  The inclusion of a coverage guarantee extends explainability in the sense of Definition \ref{def:explainable} which, for an explainable model, guarantees than an explanation exists not that an explanation of a given coverage exists.

One approach to extending topological explainability is to modify the rule $\varphi$ defining potential explanations to depend on $\mu$. Let us define $\varphi_{\mu, \alpha}$ to restrict to potential explanations with coverage greater than $\alpha$ such that the set of potential explanations becomes $\{ A \subseteq X | \varphi(A), \mu(A) \geq \alpha \}$. Observe, however, that $\varphi_{\mu, \alpha}$ is not a scalable rule by failing to satisfy Condition 2 from Definition \ref{def:scalable_rule} and so is not a basis for a topology. Finding a simple way to encode a coverage guarantee without compromising topological structure would be a significant improvement to the present formalism.

\subsection{Fuzzy Explanations}

Existing rule-based explanation algorithms such as Anchors return explanations where some but not necessarily all points satisfying the rule evaluate to the same label. This property results from searching for rules via sampling \cite{Ri18}. Likewise, a user may prefer a fuzzy explanation in cases where one's classifier is complex, perhaps due to overfitting. We term explanations of this sort fuzzy explanations and call the degree to which a fuzzy explanation shares the same label as the point to be explained its fidelity. Let us formalize fidelity as follows:

\begin{defn}
	For explanation scheme $(X, \varphi, \mu)$, the fidelity of explanation $A$ for classifier $f: X \rightarrow Y$ at $x \in X$ is given by $\frac{\mu \left(A \cap X_{f(x)} \right)}{\mu(A)}$ where $X_{f(x)} = \{x' \in X | f(x') = f(x) \}$ if $\mu(A) > 0$ and $0$ otherwise.
\end{defn}

Explainability in the sense of Definition \ref{def:explainable} considers only maximum fidelity rules. Unlike the case with coverage guarantees, extending topological explainability to fuzzy explanations satisfying a given level of fidelity does not compromise the structure of the explanation topology $\TT_{\varphi}$; rather, the characterization of an explainable classifier requires modification. This fuzzy explainability is a strict generalization of the former notion; there are simply more potential explanations for each point in the feature space. The difficulty here is that we can no longer represent explainability as the preimage of each label being an open set excluding a set of edge cases, since potential fuzzy explanations are not contained to the preimage of a single label. In particular, the subset of a potential fuzzy explanation contained in a label preimage can be of arbitrarily high topological complexity.

\section{Related Work} \label{s:related_work}

Rule-based explanation methods are often based on work in rule induction \cite{grzymala2005rule, macha2022rulexai}. Additional rule-based explanation algorithms include EXPLAN \cite{rasouli2020explan}, LIMREF \cite{rajapaksha2022limref}, and XPlainer \cite{ignatiev2019validating}. There are several notions of explainability beyond rule-based explanations. Let us briefly mention three: counterfactuals, surrogate models, and feature importance.

Given a point in the feature space, counterfactuals provide a collection of nearby points that the model labels differently from the given point. Counterfactuals explain a prediction by describing which changes in the feature values would have yielded a different result. \citealt{guidotti2022counterfactual} provides an up-to-date comprehensive survey of counterfactual explanations.

Surrogate models approximate an arbitrary model with an interpretable model. If the approximation is sufficiently close, then interpreting the surrogate model explains the behavior of the arbitrary model. Global approaches seek to learn a single surrogate model, while local approaches learn a surrogate model at a given point in the feature space. For instance, \citealt{bastani2019interpreting} proposes a method for learning a simple decision tree as a global surrogate, and LIME is a well-known technique learing a weighted linear model as a local surrogate \cite{Ri16}.

Feature importance explains model behavior by measuring how much each feature contributes to a prediction. A well-known approach is SHAP which is based on the game-theoretic concept of Shapley values \cite{lundberg2017unified}. Many local surrogate methods such as LIME will also yield some way of ranking features by importance \cite{Ri16}.

Several threads of research have sought to better understand explainability methods. \citealt{mullins2019identifying} is a prior attempt to express rule-based explainability in terms of topology. Adversarial attacks have been used to quantify the degree to which various methods can be manipulated \cite{wilking2022fooling}. Logic-based approaches encode models in a formal language and derive explanations \cite{ignatiev2019abduction}. \citealt{watson2021explanation} introduces an expansive framework to represent explanation methods as a communication game between two learners.

\bibliography{shexp}

\begin{thebibliography}{26}
\providecommand{\natexlab}[1]{#1}

\bibitem[{Bastani, Kim, and Bastani(2019)}]{bastani2019interpreting}
Bastani, O.; Kim, C.; and Bastani, H. 2019.
\newblock Interpreting Blackbox Models via Model Extraction.
\newblock \emph{arXiv}, abs/1706.09773.

\bibitem[{Breiman(2001)}]{breiman2001random}
Breiman, L. 2001.
\newblock Random forests.
\newblock \emph{Machine learning}, 45(1): 5--32.

\bibitem[{Burkart and Huber(2021)}]{burkart2021survey}
Burkart, N.; and Huber, M.~F. 2021.
\newblock A survey on the explainability of supervised machine learning.
\newblock \emph{Journal of Artificial Intelligence Research}, 70: 245--317.

\bibitem[{Dietterich(2000)}]{dietterich2000ensemble}
Dietterich, T.~G. 2000.
\newblock Ensemble methods in machine learning.
\newblock In \emph{International workshop on multiple classifier systems},
  1--15. Springer.

\bibitem[{Doshi-Velez and Kim(2017)}]{Do17}
Doshi-Velez, F.; and Kim, B. 2017.
\newblock {Towards A Rigorous Science of Interpretable Machine Learning}.
\newblock \emph{arXiv}, abs/1702.08608.

\bibitem[{Grzymala-Busse(2005)}]{grzymala2005rule}
Grzymala-Busse, J.~W. 2005.
\newblock Rule induction.
\newblock In \emph{Data mining and knowledge discovery handbook}, 277--294.
  Springer.

\bibitem[{Guidotti(2022)}]{guidotti2022counterfactual}
Guidotti, R. 2022.
\newblock Counterfactual explanations and how to find them: literature review
  and benchmarking.
\newblock \emph{Data Mining and Knowledge Discovery}, 1--55.

\bibitem[{Guidotti et~al.(2018{\natexlab{a}})Guidotti, Monreale, Ruggieri,
  Pedreschi, Turini, and Giannotti}]{Gu18a}
Guidotti, R.; Monreale, A.; Ruggieri, S.; Pedreschi, D.; Turini, F.; and
  Giannotti, F. 2018{\natexlab{a}}.
\newblock Local Rule-Based Explanations of Black Box Decision Systems.
\newblock \emph{arXiv}, abs/1805.10820.

\bibitem[{Guidotti et~al.(2018{\natexlab{b}})Guidotti, Monreale, Ruggieri,
  Turini, Giannotti, and Pedreschi}]{Gu18}
Guidotti, R.; Monreale, A.; Ruggieri, S.; Turini, F.; Giannotti, F.; and
  Pedreschi, D. 2018{\natexlab{b}}.
\newblock A Survey of Methods for Explaining Black Box Models.
\newblock \emph{ACM Computing Surveys}, 51(5).

\bibitem[{Ignatiev, Narodytska, and
  Marques-Silva(2019{\natexlab{a}})}]{ignatiev2019abduction}
Ignatiev, A.; Narodytska, N.; and Marques-Silva, J. 2019{\natexlab{a}}.
\newblock Abduction-based explanations for machine learning models.
\newblock In \emph{Proceedings of the AAAI Conference on Artificial
  Intelligence}, volume~33, 1511--1519.

\bibitem[{Ignatiev, Narodytska, and
  Marques-Silva(2019{\natexlab{b}})}]{ignatiev2019validating}
Ignatiev, A.; Narodytska, N.; and Marques-Silva, J. 2019{\natexlab{b}}.
\newblock On validating, repairing and refining heuristic ML explanations.
\newblock \emph{arXiv}, abs/1907.02509.

\bibitem[{Islam et~al.(2022)Islam, Ahmed, Barua, and
  Begum}]{islam2022systematic}
Islam, M.~R.; Ahmed, M.~U.; Barua, S.; and Begum, S. 2022.
\newblock A systematic review of explainable artificial intelligence in terms
  of different application domains and tasks.
\newblock \emph{Applied Sciences}, 12(3): 1353.

\bibitem[{Jia et~al.(2020)Jia, Bailey, Ramamohanarao, Leckie, and
  Ma}]{jia2020exploiting}
Jia, Y.; Bailey, J.; Ramamohanarao, K.; Leckie, C.; and Ma, X. 2020.
\newblock Exploiting patterns to explain individual predictions.
\newblock \emph{Knowledge and Information Systems}, 62(3): 927--950.

\bibitem[{Lundberg and Lee(2017)}]{lundberg2017unified}
Lundberg, S.~M.; and Lee, S.-I. 2017.
\newblock A unified approach to interpreting model predictions.
\newblock \emph{Advances in neural information processing systems}, 30.

\bibitem[{Macha et~al.(2022)Macha, Kozielski, Wr{\'o}bel, and
  Sikora}]{macha2022rulexai}
Macha, D.; Kozielski, M.; Wr{\'o}bel, {\L}.; and Sikora, M. 2022.
\newblock RuleXAI—A package for rule-based explanations of machine learning
  model.
\newblock \emph{SoftwareX}, 20: 101209.

\bibitem[{Molnar(2022)}]{Mo18}
Molnar, C. 2022.
\newblock \emph{Interpretable Machine Learning: A Guide for Making Black Box
  Models Explainable.}

\bibitem[{Mullins(2019)}]{mullins2019identifying}
Mullins, B. 2019.
\newblock Identifying the Most Explainable Classifier.
\newblock \emph{arXiv}, abs/1910.08595.

\bibitem[{Munkres(2000)}]{Mu00}
Munkres, J.~R. 2000.
\newblock \emph{Topology}.
\newblock Prentice Hall, Inc, 2nd ed. edition.

\bibitem[{Oxtoby(1980)}]{oxtoby1980measure}
Oxtoby, J.~C. 1980.
\newblock \emph{Measure and category: a survey of the analogies between
  topological and measure spaces}.
\newblock Springer-Verlag.

\bibitem[{Rajapaksha and Bergmeir(2022)}]{rajapaksha2022limref}
Rajapaksha, D.; and Bergmeir, C. 2022.
\newblock LIMREF: Local Interpretable Model Agnostic Rule-based Explanations
  for Forecasting, with an Application to Electricity Smart Meter Data.
\newblock \emph{arXiv}, abs/2202.07766.

\bibitem[{Rajapaksha, Bergmeir, and Buntine(2020)}]{rajapaksha2020lormika}
Rajapaksha, D.; Bergmeir, C.; and Buntine, W. 2020.
\newblock LoRMIkA: Local rule-based model interpretability with K-optimal
  associations.
\newblock \emph{Information Sciences}, 540: 221--241.

\bibitem[{Rasouli and Yu(2020)}]{rasouli2020explan}
Rasouli, P.; and Yu, I.~C. 2020.
\newblock EXPLAN: explaining black-box classifiers using adaptive neighborhood
  generation.
\newblock In \emph{2020 International Joint Conference on Neural Networks
  (IJCNN)}, 1--9. IEEE.

\bibitem[{Ribeiro, Singh, and Guestrin(2016)}]{Ri16}
Ribeiro, M.~T.; Singh, S.; and Guestrin, C. 2016.
\newblock ``Why Should I Trust You?'': Explaining the Predictions of Any
  Classifier.
\newblock In \emph{Proceedings of the 22nd ACM SIGKDD International Conference
  on Knowledge Discovery and Data Mining}, KDD '16, 1135--1144.

\bibitem[{Ribeiro, Singh, and Guestrin(2018)}]{Ri18}
Ribeiro, M.~T.; Singh, S.; and Guestrin, C. 2018.
\newblock Anchors: High-precision model-agnostic explanations.
\newblock In \emph{Proceedings of the AAAI conference on artificial
  intelligence}, volume~32.

\bibitem[{Watson and Floridi(2021)}]{watson2021explanation}
Watson, D.~S.; and Floridi, L. 2021.
\newblock The explanation game: a formal framework for interpretable machine
  learning.
\newblock In \emph{Ethics, Governance, and Policies in Artificial
  Intelligence}, 185--219. Springer.

\bibitem[{Wilking, Jakobs, and Morik(2022)}]{wilking2022fooling}
Wilking, R.; Jakobs, M.; and Morik, K. 2022.
\newblock Fooling Perturbation-Based Explainability Methods.
\newblock In \emph{Workshop on Trustworthy Artificial Intelligence as a part of
  the ECML/PKDD 22 program}.

\end{thebibliography}

\end{document}